\documentclass[11pt]{article}

\usepackage[preprint]{acl}

\usepackage{times}
\usepackage{latexsym}

\usepackage[T1]{fontenc}

\usepackage[utf8]{inputenc}

\usepackage{microtype}

\usepackage{inconsolata}

\usepackage{graphicx}

\usepackage{hyperref}       
\usepackage{url}            
\usepackage{nicefrac}       
\usepackage{float}

\usepackage[dvipsnames]{xcolor}
\usepackage[most]{tcolorbox}

\usepackage{caption}
\usepackage{amsmath}
\usepackage{bm}
\usepackage{amssymb}
\usepackage{amsthm}
\usepackage{booktabs}
\usepackage{multirow}

\usepackage[ruled,noend,linesnumbered]{algorithm2e}
\DontPrintSemicolon
\makeatletter
\renewcommand{\algocf@Vsline}[1]{
  \strut\par\nointerlineskip
  \algocf@bblockcode%
  \algocf@push{\skiprule}
  \hbox{\textcolor{gray!50!white}{\vrule}
    \vtop{\algocf@push{\skiptext}
      \vtop{\algocf@addskiptotal #1}}}
  \algocf@pop{\skiprule}
  \algocf@eblockcode%
}
\makeatother

\SetCommentSty{mycommfont}

\makeatletter
\newcommand{\removelatexerror}{\let\@latex@error\@gobble}
\makeatother

\usepackage{colortbl}
\newcolumntype{a}{>{\columncolor{Gray!20}}c}

\usepackage{tikz}
\usepackage{pgfplots}
\pgfplotsset{compat=1.18}
\usetikzlibrary{math,calc,positioning,intersections,fit,matrix,datavisualization,patterns.meta,decorations.pathreplacing,decorations.markings,arrows.meta,shapes.geometric,pgfplots.groupplots,shapes.misc,patterns}
\usepgfplotslibrary{fillbetween}
\tikzset{>=latex}

\pgfdeclarelayer{background}
\pgfsetlayers{background,main}

\usepackage{colortbl}
\newcolumntype{a}{>{\columncolor{Gray!20}}c}
\newcolumntype{i}{>{\columncolor{Gray!10}}c}
\newcolumntype{j}{>{\columncolor{Gray!20}}c}

\def\eqref#1{equation~\ref{#1}}

\def\1{\bm{1}}

\def\va{{\bm{a}}}

\def\vr{{\bm{r}}}

\def\vu{{\bm{u}}}
\def\vv{{\bm{v}}}
\def\vw{{\bm{w}}}
\def\vx{{\bm{x}}}
\def\vy{{\bm{y}}}

\def\eva{{a}}

\def\evw{{w}}

\def\evy{{y}}

\DeclareMathAlphabet{\mathsfit}{\encodingdefault}{\sfdefault}{m}{sl}
\SetMathAlphabet{\mathsfit}{bold}{\encodingdefault}{\sfdefault}{bx}{n}

\DeclareMathOperator*{\argmax}{arg\,max}

\usepackage{stmaryrd}

\newtheorem{definition}{Definition}
\newtheorem{proposition}{Proposition}

\title{Suffix-Constrained Greedy Search Algorithms for Causal Language Models}

\author{
    Ayoub Hammal\textsuperscript{1} \qquad Pierre Zweigenbaum\textsuperscript{1} \qquad Caio Corro\textsuperscript{2} 
    \\
    \textsuperscript{1}Université Paris-Saclay, CNRS, LISN
    \\
    \textsuperscript{2}INSA Rennes, IRISA, CNRS, Université de Rennes
    \\
    \texttt{\{ayoub.hammal,pz\}@lisn.fr} \qquad\texttt{caio.corro@irisa.fr}
}

\begin{document}
\maketitle
\begin{abstract}
Large language models (LLMs) are powerful tools that have found applications beyond human-machine interfaces and chatbots.
Beside free-form generation,
there has been an interest in constrained generation,
a setting where LLMs are constrained to generate well-formed outputs with respect to the language defined by a formal grammar.
Although appealing, this setting may be over restrictive for downstream applications.
For example, many LLM tasks require the model to reason freely before generating its final response in a specific format.

In this work, we introduce suffix-constrained generation, a constrained generation setting in which only the end of the response is constrained by a grammar, a scenario that is not supported by existing constrained generation methods.
We introduce several suffix-constrained generation algorithms that are based on greedy search.
We experiment on several datasets, and show that our approach allows to guarantee suffix constraints without having a negative impact on results, and even improving them in many settings.
\end{abstract}

\section{Introduction} \label{sec:introduction}

Large language models (LLMs) exhibit strong generative and reasoning capabilities and are used to tackle a wide range of downstream tasks.
In many practical settings,
there is a need to control LLM outputs, for example in order to follow an expected format or to restrict vocabulary to a predefined set.
Although one can rely on fine-tuning, careful prompting or in-context learning to achieve output controllability \citep{ouyang2022followinstructions, chen2023promptingorfinetuning, cruz2025prompt},
off-the-shelf generation methods still produce free-form text and none of these technics can formally guarantee their well-formedness \citep{li2024measuring, tang2023context}.
As such, there is an interest in constrained generation.
In this setting, one defines the (possibly infinite) language (or set) of well-formed outputs, for example using a finite-state automaton or a context-free grammar, and then, at each generation step, only allows for tokens that lead to well-formed outputs \citep{outlines,geng2023grammarconstraineddecodingforstructured,ugare2024syncodellmgenerationgrammar}.
However, restricting the full output to be in the language of a predefined grammar may be overly restrictive.
For example, free-form reasoning steps improve downstream performance \citep{wei2022chainofthought,tam2024speakfreely}.
As such, there is a need for a middle ground between free-form and constrained generation \cite{banerjee2025crane}.
We introduce the \emph{suffix-constrained generation} problem:
restrict an LLM to produce only outputs that have at least one suffix (sequence of tokens ending the output) that is in the language of a grammar.

A common application of our problem, and the one on which we evaluate our contributions, is question answering, in which an LLM is expected to generate an output like ``\texttt{<Reasoning> The answer is: <Answer>}'', where \texttt{<Reasoning>} denotes intermediate reasoning tokens, and \texttt{<Answer>} denotes the final answer in a task-specific format.
To this end, the most straightforward approach is to provide formatting instructions in the prompt and to extract the last match of a predefined pattern in the output: the last number, the last element from a finite list (\emph{e.g.}, A, B, C, D), the last expression in a \texttt{\textbackslash boxed\{\}} mathematical \LaTeX\ command, etc.
However, an LLM, especially smaller less capable models, may ignore the formatting instruction during generation or, due to LLM verbosity, produce extra information after the answer, among other complications such as exceeding the generation budget, which hinders automatic extraction of the answer (see examples in App.~\ref{app:motivation}).
Another counterargument one might raise is that there are problems for which describing a grammar in natural language in the prompt may be imprecise or difficult or even drastically increase the number of tokens in the (limited) context window.
As such, using any formal grammar to constrain generation could be easier and more practicable than designing an overly complex prompt.\footnote{For example, the reader can check the formal grammar for generating well-formed \emph{browsable scalable vector graphics}: \url{https://slebok.github.io/zoo/markup/graphical/svg/furubayashi/extracted/index.html}}

In this work, we propose several greedy search algorithms for suffix-constrained generation,
that is generation algorithms that always return an output whose \emph{suffix} belongs to a predefined language of well-formed outputs.
In order to keep the generation cost to a minimum, our algorithms require a beam\footnote{Our algorithms are based on greedy search but may explore two different hypotheses in parallel due to suffix constraints, but they are not beam search algorithms \emph{per se}.} of at most two.
Intuitively, our greedy search algorithms transition from unconstrained token generation steps to automaton/grammar-constrained generation.
As such, in the question answering example, the LLM response is guaranteed to have its final answer extracted correctly using simple pattern-matching techniques.

Our contributions can be summarized as follows:
(1)~we introduce suffix-constrained generation for LLMs;
(2)~we demonstrate that suffix-constrained generation cannot be addressed with the same methods as ``standard'' constrained generation (Proposition~\ref{prop:glutton_constrained});
(3)~we introduce different greedy search algorithms that keep the generation cost to a minimum by maintaining at most two hypotheses in their beam;
(4)~we experiment with OLMo 2 and Gemma 3 on five question-answering datasets, and show that our approach is easy to apply and maintains or even improves downstream task results.

\textbf{Notations.}
The set $V$ is the token vocabulary, and the special end-of-generation symbol is $\bot$, such that $\bot \notin V$.
Moreover, we define $\overline V \triangleq V \cup \{\bot\}$.
We write the next token distribution of an LLM as $p(v | \vx)$ for a given prefix $\vx \in V^\ast$ and token $v\in \overline V$.
We write vectors as $[...]$ and sequences as $\langle ... \rangle$.
We index vectors using elements of any set, for example $\left[ v \right]_{v \in V}$ is a vector containing every elements of the set $V$.
The empty sequence is written $\epsilon$, and $\odot$ is the sequence concatenation operator.
\section{Background}
\label{sec:background}

Given an input $\vx$, generation aims to compute the best output as measured by the causal (or autoregressive) model distribution:
\begin{align}\label{eq:obj}
    \widehat\vy \in \argmax_{\vy \in Y} \prod_{t = 1}^{|\vy|} p(\evy_t | \vx \odot \langle \evy_i\rangle_{i=1}^{t-1}),
\end{align}
where $Y$ is the set of well-formed outputs.
Computing $\widehat\vy$ is intractable in general.
As such, various approximation methods have been proposed, including greedy search and beam search \citep{lowerre1976harpy,graves2012seqtransduction}.

\subsection{Greedy Search}
\label{sec:greedy_generation}

Greedy search (or decoding) is a simple algorithm that approximates the solution to the problem in Equation~\ref{eq:obj} by starting with an empty output and repeatedly concatenating the next token with the highest probability to the current partial output.
In the unconstrained generation scenario, the set of well-formed outputs is defined as:
\[
Y_\text{free} \triangleq V^* \times \{\bot\},
\]
that is,\ $Y_\text{free}$ is the set of strings from the free monoid generated by $V$ followed by the special end-of-generation symbol $\bot$.
The role of the special symbol $\bot$ is twofold.
First, it is necessary for a locally normalized model to induce a well-defined probability distribution over variable-length sequences,
avoiding probability mass leakage to infinite continuations \citep[Theorem~2.5.3]{cotterell2024formalaspects}.
Second, it simplifies inference procedures such as greedy search, since sequence generation can be terminated in a principled manner when $\bot$ is selected as the next token.\footnote{As $\bot \notin V$, when $\bot$ is selected as the next output token, the new partial output is not contained in $V^*$ but is in $Y_\text{free}$.}
The greedy search algorithm for the set of well-formed outputs $Y_\text{free}$ is outlined in Alg.~\ref{alg:greedy_search}.\footnote{Algorithm color code is the following:
\textcolor{brown}{brown is for greedy state;}
\textcolor{violet}{violet is for constrained hypothesis state;}
\textcolor{ForestGreen}{green is for constrained hypothesis interruption condition.}}

\begin{table}[!ht]
\centering
\begin{minipage}[t]{\linewidth}
    \begingroup
\removelatexerror
\begin{algorithm}[H]
    \small
    \caption{\texttt{greedy\_search}}
    \label{alg:greedy_search}
    \KwData{A model $p$. A prefix $\vx$.}
    \KwResult{A response sequence $\vy \in Y_{\text{free}}$.}

    \tcp{Starts with the empty sequence}
    \textcolor{brown}{$\vy \gets \epsilon$} 

    \tcp{The condition reduces to checking if the last token in $\vy$ is $\bot$.}
    \While{\textcolor{brown}{$\vy \notin Y_\text{\upshape free}$}}{
        \textcolor{brown}{$\vw \gets \big[~\log p(v | \vx \odot \vy)~\big]_{v \in \overline V}$} \;
        \textcolor{brown}{$\vy \gets \vy \odot \langle \argmax_{v \in \overline V} \evw_v \rangle$} \; 
    }
    \Return{\textcolor{brown}{$\vy$}} \;
\end{algorithm}
\endgroup
    \begingroup
\removelatexerror
\begin{algorithm}[H]
    \small
    \caption{\texttt{constrained\_gs}}
    \label{alg:constrained_greedy_search}
    \KwData{A model $p$. A prefix $\vx$. A grammar $\mathcal G$.}
    \KwResult{A response sequence $\vy \in Y_\mathcal G$.}

    \textcolor{violet}{$\vy \gets \epsilon$ \;} 

    \While{\textcolor{violet}{$\vy \notin Y_\mathcal G$}}{
        \textcolor{violet}{$\vw \gets \big [~ \log p(v | \vx \odot \vy) ~\big ]_{v \in \overline V}$} \;
        \tcp{Constraint the next token generation using the transition operator}
        \textcolor{violet}{$\vy \gets \vy \odot \langle \argmax_{v \in \overline{\operatorname{next}}_{\mathcal G}(\vy)} \evw_v \rangle$} \;
    }
    \Return{\textcolor{violet}{$\vy$}} \;
\end{algorithm}
\endgroup
    \begingroup
\removelatexerror
\begin{algorithm}[H]
    \small
    \caption{\texttt{greedy\_pipeline\_search}}
    \label{alg:pipeline_search}
    \KwData{A model $p$. A prefix $\vx$. A grammar $\mathcal G$.}
    \KwResult{A response sequence $\vy \in Y_{\to\mathcal G}$.}

    \textcolor{brown}{
        $\vy \gets \texttt{greedy\_search}(p, \vx)
    $}\;
    
    \textcolor{violet}{
        $\vx' \gets \vx \odot \langle y_i \rangle_{i=1}^{|\vy|-1}$
    } \;

    \textcolor{violet}{
        $\vy' \gets \texttt{constrained\_gs}(p, \vx', \mathcal G)$
    }\;

    \Return{\textcolor{violet}{$\langle y_i \rangle_{i=1}^{|\vy|-1} \odot \vy'$}} \;
\end{algorithm}
\endgroup
\end{minipage}
\end{table}

\subsection{Constrained Generation}
\label{sec:constrained_gen}

In the constrained generation scenario, we are given a grammar $\mathcal G$, and we aim for outputs from the language generated by $\mathcal G$, denoted $\operatorname{lang}(\mathcal G) \subseteq V^\ast$.
The set of well-formed outputs is then defined as:
\[
Y_{\mathcal G} \triangleq \operatorname{lang}(\mathcal G) \times \{ \bot \},
\]
that is, it is the set of strings in the language generated by $\mathcal G$ followed by the end-of-generation token.
To constrain the generation of sequences of $Y_{\mathcal G}$ only, one needs to change the greedy search algorithm so that each generation step is guaranteed to lead to a sequence in $Y_{\mathcal G}$ \citep{outlines,geng2023grammarconstraineddecodingforstructured,ugare2024syncodellmgenerationgrammar}.

\begin{definition}[Prefix] \label{def:prefix_language}
    Given a grammar $\mathcal G$ defined over a vocabulary $V$,
    the prefix language of $\mathcal G$ is noted as $\operatorname{pref}(\mathcal G)$ and is defined as:
    \[
        \operatorname{pref}(\mathcal G) \triangleq \{ \vu \in V^* | \exists \vv \in V^*: \vu \odot \vv \in \operatorname{lang}(\mathcal G) \}.
    \]
\end{definition}

\begin{definition}[Transition operator] \label{def:transition_operator}
Given grammar $\mathcal G$ and a sequence $\vy \in \operatorname{pref}(\mathcal G)$,
we define the transition operator $\operatorname{next}_{\mathcal G}$ as the set of tokens
that can be concatenated to a given sequence to build a new sequence that is in the prefix language of $\mathcal G$:
\[
    \operatorname{next}_{\mathcal G}(\vy)
    \triangleq
    \left\{
        v \in V
        |
        \vy \odot \langle v \rangle \in \operatorname{pref}(\mathcal G)
    \right\}.
\]
For convenience, we also define an extended operator that includes the end-of-generation token:
\[
    \overline{\operatorname{next}}_{\mathcal G}(\vy)
    \triangleq
    \operatorname{next}_{\mathcal G}(\vy)
    \cup
    \begin{cases}
        \{ \bot \} &\text{if~} \vy \in \operatorname{lang}(\mathcal G),\\
        \emptyset &\text{otherwise}.
    \end{cases}
\]
\end{definition}

With these definitions, we can modify the standard greedy search algorithm to constrain the generation to the language of $\mathcal G$, as shown in Alg.~\ref{alg:constrained_greedy_search}.

To compute $\operatorname{next}_{\mathcal G}(\vy)$, we can use any left-to-right parser available for $\mathcal G$.
For example, if $\mathcal G$ is a context-free grammar (CFG),
the set of allowed next tokens for the next position can be computed using the Earley algorithm \cite{earley,stolcke1995,opedal2023efficient}.

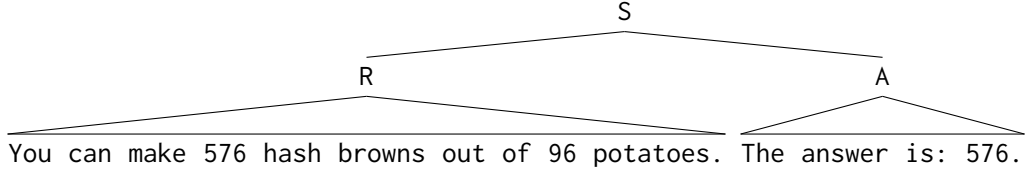
\begin{figure*}
\center
\begin{tikzpicture}[
    every node/.style={
        rectangle,
        inner xsep=0cm,
        inner ysep=0.1cm,
        text height=1.5ex,
        text depth=.25ex,
    }
]
            
    \node (w1) [rectangle] {\texttt{You}};
    \node (w2) [rectangle, right=0.2cm of w1] {\texttt{can}};
    \node (w3) [rectangle, right=0.2cm of w2] {\texttt{make}};
    \node (w4) [rectangle, right=0.2cm of w3] {\texttt{576}};
    \node (w5) [rectangle, right=0.2cm of w4] {\texttt{hash}};
    \node (w6) [rectangle, right=0.2cm of w5] {\texttt{browns}};
    \node (w7) [rectangle, right=0.2cm of w6] {\texttt{out}};
    \node (w8) [rectangle, right=0.2cm of w7] {\texttt{of}};
    \node (w9) [rectangle, right=0.2cm of w8] {\texttt{96}};
    \node (w10) [rectangle, right=0.2cm of w9] {\texttt{potatoes.}};

    \node (w11) [rectangle, right=0.2cm of w10] {\texttt{The}};
    \node (w12) [rectangle, right=0.2cm of w11] {\texttt{answer}};
    \node (w13) [rectangle, right=0.2cm of w12] {\texttt{is:}};
    \node (w14) [rectangle, right=0.2cm of w13] {\texttt{576.}};

    \draw (w1.north west) --coordinate (part1)  (w10.north east);
    \coordinate[above=0.2cm of part1] (part1_b);
    \draw (w1.north west) -- (part1_b);
    \draw (w10.north east) -- (part1_b);

    \draw (w11.north west) --coordinate (part2)  (w14.north east);
    \coordinate[above=0.2cm of part2] (part2_b);
    \draw (w11.north west) -- (part2_b);
    \draw (w14.north east) -- (part2_b);

    \node (R) [rectangle,anchor=south] at (part1_b) {\texttt{R}};
    \node (A) [rectangle,anchor=south] at (part2_b) {\texttt{A}};

    \coordinate (RA) at ($(R)!0.5!(A)$);
    \node (S) [rectangle,anchor=south, above=0.5cm of RA] {\texttt{S}};

    \draw (S.south) -- (R.north);
    \draw (S.south) -- (A.north);
    
\end{tikzpicture}

\caption{Exemple of a derivation with a suffix-constrained grammar ${\to}\mathcal G$. The \texttt{R} non-terminal derives to the reasoning sequence whereas \texttt{A} derives to the constrained answer.}
\label{fig:derivation}

\end{figure*}

\section{Suffix-Constrained Generation} \label{sec:suffix_constrained_generation}

In the suffix-constrained generation scenario,
we assume we have access to a grammar $\mathcal G$ that defines the expected format of the answer, \emph{e.g.}, ``\texttt{The answer is: <Answer>.}'',
where \texttt{<Answer>} may be constrained to follow any expected format (\emph{e.g.}, a floating point number).
Our goal is to allow the LLM to freely generate reasoning steps in the beginning of the output, but constrain the end of the output to follow this grammar.
In this section,
we show that we cannot rely on the constrained generation approach described in Sec.~\ref{sec:constrained_gen} for this setting, and propose a first simple solution to this problem.

\subsection{Issue with Suffix-Constrained Grammars}

We now formalize the notion of suffix-constrained grammar.

\begin{definition}[Suffix-constrained grammar]
    Let $\mathcal G$ be a grammar on vocabulary $V$.
    A suffix-constrained grammar ${\to}\mathcal G$ of $\mathcal G$ is a grammar whose language is defined as:
    \[
    \operatorname{lang}({\to}\mathcal G)
    =
    V^* \times \operatorname{lang}(\mathcal G).
    \]
\end{definition}

Without loss of generality, we assume $\mathcal G$ is a CFG with a single start symbol \texttt{A}.
As $V^*$ is trivially a context-free language,
we can build ${\to}\mathcal G$ as a CFG including all production rules of $\mathcal G$,
and extra non-terminals \texttt{S} and \texttt{R}, and the following extra production rules:
\[
    \texttt{S} \to \texttt{R}~\texttt{A} \qquad \texttt{R} \to \epsilon
    \qquad
    \texttt{R} \to v \texttt{R},~\forall v \in V
\]
Setting \texttt{S} as the start symbol, we obtain the expected grammar, see the derivation in Figure~\ref{fig:derivation} as an example.\footnote{Note that $V^*$ is a regular language. As such, a similar construction can also be realized with finite-state automata.}
In our setting, the set of well-formed outputs for the LLM is defined as:
\[
    Y_{{\to}\mathcal G}
    \triangleq \operatorname{lang}({\to}\mathcal G) \times \{\bot\}
    = V^* \times \operatorname{lang}(\mathcal G) \times \{\bot\}.
\]
It is therefore tempting to consider constrained greedy search on the grammar ${\to}\mathcal G$ as a method for suffix-constrained generation.
The next proposition shows that such an approach is impractical.

\begin{proposition}\label{prop:glutton_constrained}
Let $\mathcal G$ be a grammar on vocabulary $V$.
For any prefix $\vy \in \operatorname{pref}({\to}\mathcal G)$ of the suffix-constrained grammar, we have $\operatorname{next}_{{\to}\mathcal G}(\vy) = V$.
\end{proposition}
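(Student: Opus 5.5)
The plan is to prove the two inclusions $\operatorname{next}_{{\to}\mathcal G}(\vy) \subseteq V$ and $V \subseteq \operatorname{next}_{{\to}\mathcal G}(\vy)$ directly from Definitions~\ref{def:prefix_language} and~\ref{def:transition_operator}. The first inclusion is immediate, since $\operatorname{next}_{{\to}\mathcal G}(\vy)$ is by definition a subset of $V$. All the work is in the reverse inclusion.

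Fix $\vy \in \operatorname{pref}({\to}\mathcal G)$. By Definition~\ref{def:prefix_language}, this means there is some $\vv \in V^*$ with $\vy \odot \vv \in \operatorname{lang}({\to}\mathcal G) = V^* \times \operatorname{lang}(\mathcal G)$. Consequently $\operatorname{lang}(\mathcal G) \neq \emptyset$, and I fix an arbitrary witness $\vw \in \operatorname{lang}(\mathcal G)$. This is the only point where the hypothesis on $\vy$ is used. It is needed to rule out the degenerate case of an empty grammar, in which $\operatorname{pref}({\to}\mathcal G)$ is empty and the statement holds vacuously. Note also that $\vy \in V^*$, since prefixes are by definition sequences over $V$.

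Now take an arbitrary token $v \in V$. Then $\vy \odot \langle v \rangle \in V^*$, because $V^*$ is closed under concatenation. Therefore
\[
(\vy \odot \langle v \rangle) \odot \vw \in V^* \times \operatorname{lang}(\mathcal G) = \operatorname{lang}({\to}\mathcal G).
\]
Taking $\vw \in V^*$ as the continuation in Definition~\ref{def:prefix_language} shows $\vy \odot \langle v \rangle \in \operatorname{pref}({\to}\mathcal G)$. By Definition~\ref{def:transition_operator}, this gives $v \in \operatorname{next}_{{\to}\mathcal G}(\vy)$. As $v$ was arbitrary, $V \subseteq \operatorname{next}_{{\to}\mathcal G}(\vy)$, which concludes the proof.

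There is no real obstacle. The only step requiring care is extracting the nonemptiness of $\operatorname{lang}(\mathcal G)$ from the assumption $\vy \in \operatorname{pref}({\to}\mathcal G)$, rather than assuming it silently. Once a witness $\vw$ is fixed, the argument is a one-line closure property of $V^*$. The interpretation I would then stress in the text is that constrained greedy search on ${\to}\mathcal G$ (Alg.~\ref{alg:constrained_greedy_search}) never restricts any token in $V$. The only possible intervention is the $\bot$ check in $\overline{\operatorname{next}}_{{\to}\mathcal G}$, which forbids stopping when no suffix is in $\operatorname{lang}(\mathcal G)$. As a result, generation may never terminate in a well-formed way.
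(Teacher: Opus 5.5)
Your proof is correct and follows the paper's argument: $\vy \odot \langle v \rangle \in V^*$ is absorbed by the free reasoning part, which the paper phrases as the non-terminal \texttt{R} deriving $\vy \odot \langle v \rangle$, and is then completed by any string of $\operatorname{lang}(\mathcal G)$. The only difference is that you work directly with $\operatorname{lang}({\to}\mathcal G) = V^* \times \operatorname{lang}(\mathcal G)$ and explicitly derive $\operatorname{lang}(\mathcal G) \neq \emptyset$ from the hypothesis $\vy \in \operatorname{pref}({\to}\mathcal G)$, a step the paper's one-line proof uses only tacitly.
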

\begin{proof}
The proof is trivial as for all $v \in V$ we have \(\vy \cdot \langle v \rangle \in V^*\). As such, non-terminal $\texttt{R}$ in our construction can be derived into $\vy \odot \langle v \rangle$,
and it is therefore in the prefix language of ${\to}\mathcal G$.
\end{proof}

Proposition~\ref{prop:glutton_constrained} shows that using the constrained greedy search method does not work, as the extended transition operator is simply:
\[
    \overline{\operatorname{next}}_{{\to}\mathcal G}(\vy) = V
    \cup
    \begin{cases}
        \{ \bot \} &\text{if~} \vy \in \operatorname{lang}({\to}\mathcal G),\\
        \emptyset &\text{otherwise}.
    \end{cases}
\]
In other words, the grammar is not used to explicitly guide the LLM to output an answer in the expected format:
the model will (maybe) emit an end-of-generation token if and only if unconstrained greedy search also generates a sequence in $\operatorname{lang}({\to}\mathcal G)$.
In practice, this means that if, for a given prompt, the LLM does not generate an answer in the expected format, it is forced to continue generating without any extra information.
In preliminary experiments, we observed that this always leads to hitting the generation budget.

\subsection{Greedy Pipeline}

We now propose a first simple and naive solution to suffix-constrained generation, which we call \emph{greedy pipeline}.
Given an input prompt $\vx$,
we first apply standard greedy decoding to compute the unconstrained generation $\vy \in Y_\text{free}$.
Then, we build the new prompt $\vx' = \vx \odot \langle y_i \rangle_{i=1}^{|\vy|-1}$ by removing the last $\bot$ token from the generation $\vy$.
Finally, we apply constrained greedy search on $Y_{\mathcal G}$ using $\vx'$ as a prompt.
This process is illustrated in Alg.~\ref{alg:pipeline_search}.
This approach can be easily and efficiently implemented in any generation framework, as long as we know how to search for an output in $Y_{\mathcal G}$.

\section{Beam-Based Algorithms} \label{sec:beam_based_algorithms}

\begin{algorithm}[t!]
    \small
    \caption{\texttt{constrained\_beam}}
    \label{alg:constr_beam}
    \KwData{A model $p$. A prefix $\vx$. A grammar $\mathcal G$.}
    \KwResult{A response sequence $\vy \in Y_{\to\mathcal G}$.}

    \textcolor{brown}{$\vy \gets \epsilon,~s \gets 0$} \; 
    \textcolor{violet}{$\vr \gets \epsilon,~\va \gets \epsilon,~s' \gets -\infty$} \; 

    \While{\textcolor{violet}{$\vr \odot \va \notin Y_{\to\mathcal G}$}}{
        \textcolor{brown}{$\vw \gets \big [~ \log p(v | \vx \odot \vy) ~\big ]_{v \in \overline V}$} \;
        \textcolor{violet}{$\vw' \gets \big [~ \log p(v | \vx \odot \vr \odot \va) ~\big ]_{v \in \overline V}$} \;

        \eIf{\textcolor{ForestGreen}{$s + \max_{v \in \overline{\operatorname{next}}_{\mathcal G}(\epsilon)} \evw_v \newline \null\qquad < s' + \max_{v \in \overline{\operatorname{next}}_{\mathcal G}(\va)} \evw'_v$}}
        {
            \textcolor{violet}{$s' \gets s' + \max_{v \in \overline{\operatorname{next}}_{\mathcal G}(\va)} \evw'_v$} \;
            \textcolor{violet}{$\va \gets \va \odot \langle \argmax_{v \in \overline{\operatorname{next}}_{\mathcal G}(\va)} \evw'_v \rangle$} \;
        }
        {
            \textcolor{violet}{$\vr \gets \vy$} \;
            \textcolor{violet}{$s' \gets s + \max_{v \in \overline{\operatorname{next}}_{\mathcal G}(\epsilon)} \evw_v$} \;
            \textcolor{violet}{$\va \gets \langle \argmax_{v \in \overline{\operatorname{next}}_{\mathcal G}(\epsilon)} \evw_v \rangle$} \;
        }
        
        \textcolor{brown}{$s \gets s + \max_{v \in V} \evw_v$} \;
        \textcolor{brown}{$\vy \gets \vy \odot \langle \argmax_{v \in V} \evw_v \rangle$} \;
    }
    \Return{\textcolor{violet}{$\vr \odot \va$}} \;
\end{algorithm}
\begin{algorithm}[t!]
    \small
    \caption{\texttt{bifurcation\_penalty}}
    \label{alg:constr_derivation}
    \KwData{A model $p$. A prefix $\vx$. A grammar $\mathcal G$. A boolean $\alpha$ indicating if the penalty difference should be computed using logits. A boolean $\beta$ indicating that we must return the last constrained hypothesis instead of the one of minimum bifurcation penalty.}
    \KwResult{A response sequence $\vr \odot \va \in Y_{\to\mathcal G}$.}

    $\vy \gets \epsilon, \vr \gets \epsilon, \va \gets \epsilon$ \tcp*{Hypotheses}
    $\Delta = \infty$ \tcp*{Constr.\ hyp.\ deviation penalty}
    $\Delta_\text{best} \gets \infty, \vr_\text{best} \gets \epsilon, \va_\text{best} \gets \epsilon$
    \tcp*{Best hyp.}
    
    \While{\textcolor{brown}{$\vy \notin Y_\text{\upshape free}$}}{
        \textcolor{brown}{$\vw \gets \big [~ p(v | \vx \odot \vy) ~\big ]_{v \in \overline V}$} \;
        \textcolor{violet}{$\vw' \gets \big [~ p(v | \vx \odot \vr \odot \va) ~\big ]_{v \in \overline V}$} \;

        \tcp{Compute the deviation penalty}
        \eIf{$\alpha$}{
            $\Delta_\text{new}\gets \max_{v \in \overline{V}} \log \evw_v \newline\null\qquad\quad\qquad -\max_{v\in\overline{\operatorname{next}}_{\mathcal G}(\epsilon)} \log \evw_v$ \;
        }{
            $\Delta_\text{new}\gets \max_{v \in \overline{V}} \evw_v \newline\null\qquad\quad\qquad- \max_{v\in\overline{\operatorname{next}}_{\mathcal G}(\epsilon)} \evw_v$ \;
        }

        \eIf{\textcolor{ForestGreen}{$\Delta_\text{new}< \Delta$}}{
            $\Delta \gets \Delta_\text{new}$ \;
            \textcolor{violet}{$\vr \gets \vy$} \;
            \textcolor{violet}{$\va \gets \langle \argmax_{v\in\overline{\operatorname{next}}_{\mathcal G}(\epsilon)} \evw_v \rangle$} \;
        }{
            \textcolor{violet}{$\va \gets \va \odot \langle \argmax_{v\in\overline{\operatorname{next}}_{\mathcal G}(\va)} \evw'_v \rangle$} \;
            \tcp{Update deviation penalty until greedy \& const.\ hyp.\ disagree}
            \If{$\Delta = 0$}{
                $\Delta = \Delta_\text{new}$ \;
            }
            
            \If{\textcolor{violet}{$\va \in Y_{\mathcal G}$}}{
                \If{$\beta \lor \Delta < \Delta_\text{best}$}{
                    $\Delta_\text{best} \gets \Delta$ \;
                    $\vr_\text{best} \gets \vr, \va_\text{best} \gets \va$ \;
                    \If{$\Delta = 0$}{
                        \tcp{Unconstrained output is well-formed}
                        \textbf{break}
                    }
                }
                $\Delta \gets \Delta_\text{new}$ \; 
                \textcolor{violet}{$\vr \gets \vy$} \;
                \textcolor{violet}{$\va \gets \langle \argmax_{v\in\overline{\operatorname{next}}_{\mathcal G}(\epsilon)} w_v \rangle$} \;
            }
        }

        \textcolor{brown}{$\vy \gets \vy \odot \langle \argmax_{v \in \overline V} \evw_v \rangle$} \;
    }

    \If{$\beta \lor \Delta < \Delta_\text{best}$}{
        \While{\textcolor{violet}{$\va \notin Y_\mathcal G$}}{
            \textcolor{violet}{$\vw' \gets \big [~ \log p(v | \vx \odot \vr \odot \va) ~\big ]_{v \in \overline V}$} \;
            \textcolor{violet}{$\va \gets \va \odot \langle \argmax_{v \in \overline{\operatorname{next}}_{\mathcal G}(\va)} \evw'_v \rangle$} \;
        }
        $\vr_\text{best} \gets \vr, \va_\text{best} \gets \va$ \;
    }

    \Return $\vr_\text{best} \odot \va_\text{best}$

\end{algorithm}

In this section, we propose heuristic algorithms based on decoding with a beam of width two.
For all of them,
the main hypothesis in the beam corresponds to unconstrained greedy search (derivation of the non-terminal \texttt{R} in the CFG example),
whereas the secondary hypothesis corresponds to a sequence where the constrained suffix is being generated (derivation of the non-terminal \texttt{A} in the CFG example).
The algorithms differ in how they replace the current secondary hypothesis with a new one and how a completed hypothesis is selected.

\subsection{Constrained Hypothesis Beam Search}
\label{sec:cst_hyp}

We introduce a constrained generation algorithm that is based on a parallel decoding process, using a beam of width two:
the first hypothesis corresponds to unconstrained greedy decoding (\emph{i.e.}, free reasoning),
whereas the second hypothesis corresponds to a generation that transitioned to the constrained answer part.
We will refer to these states as the greedy hypothesis $\vy$ and the constrained answer hypothesis $\vr \odot \va$, respectively, where $\vr$ (resp.\ $\va$) corresponds to reasoning (resp.\ constrained answer) tokens.
At each step, we first check if the constrained hypothesis $\vr \odot \va$ should be continued or replaced.
The test is simple: the current constrained hypothesis must be continued if starting the constrained answer after $\vy$ leads to a hypothesis of lower score:
\begin{align*}
    &\overbrace{s + \max\nolimits_{v \in \overline{\operatorname{next}}_{\mathcal G}(\epsilon)} \log p(v | \vx \odot \vy)}^\text{replacement score} <
    \\
    & \qquad \underbrace{s' + \max\nolimits_{v \in \overline{\operatorname{next}}_{\mathcal G}(\va)} \log p(v | \vx \odot \vr \odot \va)}_\text{continuation score},
\end{align*}
where $s$ (resp.\ $s'$) is the log probability of the greedy (resp.\ constrained) hypothesis, that is, $\log p(\vy | \vx)$ (resp.\ $\log p(\vr \odot \va | \vx)$).
Replacing the constrained hypothesis simply reduces to setting the reasoning tokens $\vr$ to the current greedy hypothesis tokens $\vy$,
and $\va$ to the first token of the answer, an illustration is given in App.~\ref{app:constrained_hypothesis_beam}.

Finally, for the greedy hypothesis $\vy$, we simply append the next best token excluding the end-of-generation $\bot$ one, as the greedy hypothesis is not allowed to end.
The generation ends only when the algorithm selects the end-of-generation token $\bot$ as the next token for the constrained hypothesis.
Pseudo-code is given in Alg.~\ref{alg:constr_beam}.

\subsection{Bifurcation Position Penalty}
\label{sec:bifurcation_penalty}

The previous constrained beam algorithm relies on the probability of the partial constrained hypothesis to decide whether it should be continued or not.
Although it is a natural criterion for beam search, in our case \emph{the unconstrained greedy hypothesis is often considerably more likely than the constrained hypothesis}, leading to repeated hypothesis replacement.
Note that this pattern aligns with previous work showing that many hypotheses in a beam search procedure often share the same prefix \citep{chan2025beamtrie}.
As a result, the algorithm introduced in Sec.~\ref{sec:cst_hyp} often reaches the end of the generation budget without producing a well-formed output.
To avoid this limitation, we propose a different criterion for hypothesis replacement that is only based on the score difference at the bifurcation position, that is, the position from which a candidate output switches from reasoning to constrained answering.

\textbf{Scoring via Bifurcation Penalty.}
Consider the greedy hypothesis $\vy = \langle \evy_1 \dots \evy_t\rangle$.
The bifurcation or deviation penalty at the next step is defined as:
\begin{align}\label{eq:penalty}
    \max_{v \in \overline{V}} p(v | \vx \odot \vy)~~-~~ \max_{v\in\overline{\operatorname{next}}_{\mathcal G}(\epsilon)} p(v | \vx \odot \vy),
\end{align}
In other words, it is the probability difference between the most probable token among the full vocabulary and the most probable token that can start a constrained answer.
If this penalty is lower than the one of the current constrained hypothesis, we replace the latter with a new hypothesis, setting $\vr = \vy$ and $\va = \langle \argmax_{v\in\overline{\operatorname{next}}_{\mathcal G}(\epsilon)} p(v | \vy) \rangle$, whose score is set to the penalty in Eq.~(\ref{eq:penalty}).
Note that the next greedy token may match a token in $\overline{\operatorname{next}}_{\mathcal G}(\epsilon)$,
meaning that the penalty in Eq.~(\ref{eq:penalty}) is equal to 0.
In this case, the penalty score of the constrained hypothesis is set to 0,
but it is updated at each step until the constrained hypothesis deviates from the greedy hypothesis.
If the constrained hypothesis generates a $\bot$ token while having a penalty score of 0, this means that greedy decoding results in a well-formed output.
In such a case, our algorithm returns this output.

\textbf{Bifurcation Penalty Choice.}
We consider two bifurcation penalty alternatives.
The difference of probability as in Equation~\ref{eq:penalty},
and the difference in logit space:\footnote{For computational stability, our implementation relies on a numerically stable operations \citep[Table~3]{lieisner2009first}.}
\begin{align}
    \max_{v \in \overline{V}} \log p(v | \vy)~~-~~ \max_{v\in\overline{\operatorname{next}}_{\mathcal G}(\epsilon)} \log p(v | \vy).
\end{align}

\textbf{Constrained Hypothesis Selection.}
We explore two possible hypothesis selection choices at the end of the algorithm: (1) returning the hypothesis of minimum penalty, (2) returning the last hypothesis.

The full algorithm is given in Alg.~\ref{alg:constr_derivation}.
It is important to note that, unlike the first constrained beam algorithm, the greedy state is allowed to end.
In this case, the last hypothesis still in generation is continued until it finishes.

\section{Related work}
\label{sec:related_work}

\textbf{Constrained Generation.}
Previous works have proposed methods to explicitly enforce LLM generations to be in the language generated by a grammar, for example, a regular or context-free language \citep[][\emph{inter alia}]{ghazvininejad2016generating, allauzen2014pushdown,shin-etal-2021-constrained,poesia2022synchromesh,geng2023grammarconstraineddecodingforstructured,pmlr-v235-beurer-kellner24a,ugare2024syncodellmgenerationgrammar,NEURIPS2023_cd40d0d6}.
Roughly speaking, all these methods constrain the next token prediction so that it leads to a prefix of the target language.
These methods have been implemented in several software libraries \citep{outlines, guidance, llamacpp}.
However, as per Propostion (\ref{prop:glutton_constrained}), they all fail on suffix-constrained generation tasks.
Other works have also proposed methods to introduce lexical constraints in free-form texts \citep{anderson2017guidedopenvocabulary,lu-etal-2021-neurologic,lu-etal-2022-neurologic,zhang2024adaptivelogicalcontrol}.
These constraints specify which words (or tokens, or concepts) should be included (somewhere) in the answer.
However, these methods are orthogonal to our suffix-constrained problem.

\textbf{Generation with Reasoning.}
\citet{tam2024speakfreely} show that LLMs' performance in reasoning tasks is heavily degraded when instructed to adhere to a structured output format, such as JSON, compared to free-form responses.
This degradation is further amplified when the model's output is explicitly constrained during generation using a grammar. Our suffix-constrained approach allows to benefit from reasoning steps while having constrained outputs for the end of the generation.
Differently, \citet{banerjee2025crane} propose to switch between free-form and grammar-constrained generation with special tokens that mark the start and end of constrained sub-sequences.
The model is encouraged with special prompt instructions to generate these special tokens.
This approach targets a different problem than ours and is fundamentally different to our suffix-constrained problem that do not rely on special prompts neither generating extra special transition tokens.
Some LLM families as Qwen 3 \citep{yang2025qwen3technicalreport} and DeepSeek \citep{Guo_2025} introduce explicit tokens \texttt{<think></think>} to isolate reasoning steps, while our solution can be applied to any model without further training.

\section{Experiments} \label{sec:experiments}

\begin{table*}[t!]
    \setlength{\tabcolsep}{4pt}
    \setlength{\aboverulesep}{0pt}
    \setlength{\belowrulesep}{0pt}
    \setlength{\cmidrulesep}{0pt}
    \setlength{\cmidrulekern}{0pt}
    \small
    \centering
    \begin{tabular}{l iji c ji c jij c ij}
        \toprule
        & \multicolumn{3}{c}{\textbf{Math}} && \multicolumn{2}{c}{\textbf{MCQ}}
        && \multicolumn{3}{c}{\textbf{Math}} && \multicolumn{2}{c}{\textbf{MCQ}}
        \\
        \cmidrule{2-4} \cmidrule{6-7} \cmidrule{9-11} \cmidrule{13-14}
        & \textbf{GMS} & \textbf{MATH} & \textbf{SVMP} && \textbf{ARC} & \textbf{CSQA}
        && \textbf{GMS} & \textbf{MATH} & \textbf{SVMP} && \textbf{ARC} & \textbf{CSQA}
        \\
        
        \midrule
        
        & \multicolumn{6}{c}{\textbf{OLMo 2 1B, Instruction-tuned}}
        && \multicolumn{6}{c}{\textbf{OLMo 2 13B, Instruction-tuned}}
        \\
        \cmidrule{2-7} \cmidrule{9-14}
        \textbf{Baselines} &&&&&&&&&&&&& \\
        $\hookrightarrow$ unconstrained                   & $63.9$ & $17.0$ & $68.0$ && $\underline{46.3}$ & $47.9$ && $84.9$ & $38.0$ & $\underline{83.6}$ && $83.7$ & $77.2$ \\
        $\hookrightarrow$ constrained only                & $01.5$ & $03.8$ & $22.3$ && $43.7$ & $\mathbf{51.3}$ && $16.1$ & $11.6$ & $52.6$ && $79.5$ & $\mathbf{78.0}$ \\
        
        \textbf{Naive methods} &&&&&&&&&&&&& \\
        $\hookrightarrow$ greedy pipeline          & $\mathbf{68.8}$ & $17.0$ & $69.3$ && $46.2$ & $47.9$ && $\mathbf{87.7}$ & $\underline{38.2}$ & $\mathbf{84.0}$ && $\underline{83.9}$ & $\underline{77.9}$ \\
        $\hookrightarrow$ constr.\ hyp.\ beam       & $66.5$ & $17.0$ & $65.3$ && $39.9$ & $39.9$ && $87.0$ & $35.0$ & $\mathbf{84.0}$ && $77.8$ & $72.6$ \\
        
        \textbf{Probability bif.} &&&&&&&&&&&&& \\
        $\hookrightarrow$ min penalty              & $18.7$ & $08.6$ & $25.3$ && $41.0$ & $48.8$ && $24.1$ & $14.0$ & $35.6$ && $75.5$ & $74.6$ \\
        $\hookrightarrow$ last hypothesis          & $65.1$ & $\mathbf{18.4}$ & $\mathbf{71.3}$ && $\mathbf{47.0}$ & $48.4$ && $\underline{87.1}$ & $\underline{38.2}$ & $\underline{83.6}$ && $\mathbf{84.0}$ & $77.8$ \\
        
        \textbf{Logits bif.} &&&&&&&&&&&&& \\
        $\hookrightarrow$ min penalty              & $42.6$ & $10.0$ & $54.0$ && $42.0$ & $\underline{49.9}$ && $38.0$ & $18.6$ & $52.0$ && $72.4$ & $73.3$ \\
        $\hookrightarrow$ last hypothesis          & $\underline{66.7}$ & $\underline{17.6}$ & $\underline{71.0}$ && $46.2$ & $48.0$ && $\mathbf{87.7}$ & $\mathbf{39.2}$ & $\mathbf{84.0}$ && $\underline{83.9}$ & $\mathbf{78.0}$ \\

        \midrule
        & \multicolumn{6}{c}{\textbf{Gemma 3 1B, Instruction-tuned}}
        && \multicolumn{6}{c}{\textbf{Gemma 3 12B, Instruction-tuned}}
        \\
        \cmidrule{2-7} \cmidrule{9-14}
        \textbf{Baselines} &&&&&&&&&&&&& \\
        $\hookrightarrow$ unconstrained                   & $43.1$ & $32.8$ & $\underline{55.3}$ && $\mathbf{52.3}$ & $\mathbf{49.9}$ && $82.1$ & $65.9$ & $89.0$ && $90.7$ & $\underline{78.1}$ \\
        $\hookrightarrow$ constrained only                & $00.0$ & $05.0$ & $01.6$ && $46.2$ & $42.7$ && $02.1$ & $19.2$ & $11.3$ && $88.3$ & $\mathbf{78.7}$ \\
        
        \textbf{Naive methods} &&&&&&&&&&&&& \\
        $\hookrightarrow$ greedy pipeline          & $\mathbf{44.4}$ & $32.8$ & $57.3$ && $\underline{51.2}$ & $\mathbf{49.9}$ && $\underline{91.6}$ & $65.8$ & $\mathbf{91.6}$ && $\underline{91.6}$ & $\mathbf{78.7}$ \\
        $\hookrightarrow$ constr.\ hyp.\ beam       & $\underline{44.1}$ & $26.0$ & $\mathbf{56.0}$ && $45.6$ & $48.6$ && $\mathbf{91.8}$ & $51.5$ & $\underline{91.0}$ && $86.1$ & $76.5$ \\
        
        \textbf{Probability bif.} &&&&&&&&&&&&& \\
        $\hookrightarrow$ min penalty              & $27.5$ & $16.2$ & $38.0$ && $36.8$ & $36.3$ && $61.7$ & $41.1$ & $68.6$ && $68.5$ & $63.2$ \\
        $\hookrightarrow$ last hypothesis          & $32.7$ & $\underline{35.6}$ & $47.0$ && $50.5$ & $\underline{49.2}$ && $74.6$ & $\mathbf{69.6}$ & $80.3$ && $\mathbf{91.7}$ & $\mathbf{78.7}$ \\
        
        \textbf{Logits bif.} &&&&&&&&&&&&& \\
        $\hookrightarrow$ min penalty              & $38.5$ & $17.2$ & $49.6$ && $35.6$ & $34.9$ && $71.4$ & $32.8$ & $68.0$ && $54.7$ & $40.5$ \\
        $\hookrightarrow$ last hypothesis          & $40.0$ & $\mathbf{36.4}$ & $52.6$ && $50.5$ & $48.8$ &&$78.5$ & $\underline{68.8}$ & $79.6$ && $\underline{91.6}$ & $\mathbf{78.7}$ \\
        
        \midrule
        & \multicolumn{6}{c}{\textbf{OLMo 2 13B, Pre-trained}}
        && \multicolumn{6}{c}{\textbf{Gemma 3 12B, Pre-trained}}
        \\
        \cmidrule{2-7} \cmidrule{9-14}
        \textbf{Baselines} &&&&&&&&&&&&& \\
        $\hookrightarrow$ unconstrained                   & $60.8$ & $12.2$ & $72.6$ && $38.3$ & $21.2$ && $15.0$ & $15.5$ & $20.6$ && $55.9$ & $45.6$ \\ 
        $\hookrightarrow$ constrained only                & $18.6$ & $09.6$ & $63.3$ && $\mathbf{72.7}$ & $\mathbf{66.3}$ && $14.3$ & $17.5$ & $50.6$ && $\mathbf{84.6}$ & $\mathbf{71.7}$ \\
        
        \textbf{Naive methods} &&&&&&&&&&&&& \\ 
        $\hookrightarrow$ greedy pipeline          & $\underline{69.1}$ & $12.2$ & $74.0$ && $42.3$ & $20.6$ && $14.2$ & $18.6$ & $19.6$ && $59.5$ & $46.6$ \\
        $\hookrightarrow$ constr.\ hyp.\ beam       & $67.2$ & $10.8$ & $\mathbf{79.6}$ && $34.9$ & $21.0$ && $24.2$ & $13.1$ & $55.0$ && $53.4$ & $43.8$ \\
        
        \textbf{Probability bif.} &&&&&&&&&&&&& \\
        $\hookrightarrow$ min penalty              & $39.8$ & $\mathbf{17.0}$ & $54.3$ && $42.1$ & $\underline{21.3}$ && $26.6$ & $22.2$ & $\underline{68.3}$ && $79.4$ & $64.3$ \\
        $\hookrightarrow$ last hypothesis          & $67.7$ & $\underline{15.2}$ & $74.6$ && $43.6$ & $21.0$ && $\mathbf{31.4}$ & $\mathbf{27.7}$ & $\underline{68.3}$ && $74.3$ & $61.1$ \\
        
        \textbf{Logits bif.} &&&&&&&&&&&&& \\ 
        $\hookrightarrow$ min penalty              & $16.6$ & $12.0$ & $31.6$ && $42.7$ & $20.3$ && $22.8$ & $18.1$ & $63.0$ && $\underline{80.8}$ & $\underline{66.6}$ \\
        $\hookrightarrow$ last hypothesis          & $\mathbf{69.9}$ & $14.0$ & $\underline{75.3}$ && $\underline{44.7}$ & $21.2$ && $\underline{31.1}$ & $\underline{27.0}$ & $\mathbf{68.6}$ && $73.4$ & $60.6$ \\

        \bottomrule
    \end{tabular}
    \caption{
        OLMo 2 1B, 13B IT \& PT and Gemma 3 1B, 12B IT \& PT models' accuracy with the different presented decoding algorithms.
        \emph{unconstrained} $\rightarrow$ Alg.\ \ref{alg:greedy_search}.
        \emph{unconstrained only} $\rightarrow$ Alg.\ \ref{alg:constrained_greedy_search}.
        \emph{greedy pipeline} $\rightarrow$ Alg.\ \ref{alg:pipeline_search}.
        \emph{constr. hyp.\ beam} $\rightarrow$ Alg.\ \ref{alg:constr_beam}.
        And \emph{bif.} (short for bifurcation) $\rightarrow$ Alg.\ \ref{alg:constr_derivation}.
    }
    \label{tab:accuracy_it_big_pt}
\end{table*}

Although these algorithms can be applied on any suffix-constrained language,
\textit{we choose to evaluate them in this paper on different question-answering tasks and answer formats, as the accuracy of the model on this task can be used as a proxy to asses how much each algorithm alter the models original distribution and performance.}
As these algorithms change only the answer format, \textit{we expect improvement in answer extractability and formatting}.
Thus, for evaluation, the LLM answer is extracted via standard pattern matching.

We experiment with OLMo 2 \citep{olmo20252olmo2furious} and Gemma 3 \citep{gemma3} to cover both open-source and closed-source model families for which we have access to different checkpoints and different model sizes.
We use 2 different checkpoints: pre-trained (PT) and instruction-tuned (IT).
We evaluate our method on three \textit{math reasoning} datasets: GSM8K \citep{cobbe2021trainingverifierssolvemath}, MATH500 \citep{lightman2023letsverifystepstep} and SVAMP \citep{patel2021svamp}, and two multiple-choice question (MCQ) \textit{commonsense reasoning} datasets: ARC-Challenge \citep{allenai2018arc} and CommonsenseQA \citep{talmor2019commonsenseqa}.
Practical considerations and evaluation settings are described in App.~\ref{app:practical_considerations}.

\begin{table*}[!ht]
    \setlength{\tabcolsep}{4pt}
    \setlength{\aboverulesep}{0pt}
    \setlength{\belowrulesep}{0pt}
    \setlength{\cmidrulesep}{0pt}
    \setlength{\cmidrulekern}{0pt}
    \small
    \centering
    \begin{tabular}{l iji c ji c jij c ij}
        \toprule
        & \multicolumn{3}{c}{\textbf{Math}} && \multicolumn{2}{c}{\textbf{MCQ}}
        && \multicolumn{3}{c}{\textbf{Math}} && \multicolumn{2}{c}{\textbf{MCQ}}
        \\
        \cmidrule{2-4} \cmidrule{6-7} \cmidrule{9-11} \cmidrule{13-14}
        & \textbf{GMS} & \textbf{MATH} & \textbf{SVMP} && \textbf{ARC} & \textbf{CSQA}
        && \textbf{GMS} & \textbf{MATH} & \textbf{SVMP} && \textbf{ARC} & \textbf{CSQA}
        \\
        \midrule
        & \multicolumn{6}{c}{\textbf{OLMo 2 13B, Pre-trained}}
        && \multicolumn{6}{c}{\textbf{OLMo 2 13B, Instruction-tuned}}
        \\
        \cmidrule{2-7} \cmidrule{9-14}
        $\hookrightarrow$ unconstrained          & $84$ & $30$ & $92$ && $88$ & $97$ && $100$ & $90$ & $100$ && $100$ & $100$ \\
        $\hookrightarrow$ constr.\ hyp.\ beam          & $91$ & $1$  & $97$ && $0$  & $0$ && $100$ & $2$  & $100$ && $33$  & $26$ \\
        $\hookrightarrow$ Bif. penalty           & \multicolumn{6}{c}{\rule[2pt]{1cm}{0.5pt}$~\mathbf{100}~$\rule[2pt]{1cm}{0.5pt}} && \multicolumn{6}{c}{\rule[2pt]{1cm}{0.5pt}$~\mathbf{100}~$\rule[2pt]{1cm}{0.5pt}} \\
        
        \midrule
        & \multicolumn{6}{c}{\textbf{Gemma 3 12B, Pre-trained}}
        && \multicolumn{6}{c}{\textbf{Gemma 3 12B, Instruction-tuned}}
        \\
        \cmidrule{2-7} \cmidrule{9-14}
        $\hookrightarrow$ unconstrained          & $8$ & $41$ & $5$ && $33$ & $17$     && $100$ & $80$ & $99$ && $100$ & $100$ \\
        $\hookrightarrow$ constr.\ hyp.\ beam          & $45$ & $0$  & $59$ && $0$  & $0$    && $90$ & $0$  & $91$ && $4$  & $6$  \\
        $\hookrightarrow$ Bif.\ penalty          & $94$ & $99$ & $99$ && $100$ & $100$ && \multicolumn{6}{c}{\rule[2pt]{1cm}{0.5pt}$~\mathbf{100}~$\rule[2pt]{1cm}{0.5pt}} \\
        \bottomrule
    \end{tabular}
    \caption{Proportion of outputs that end with $\bot$ before hitting the budget limit (finished gen.). \emph{Bif.\ penalty} refers to all bifurcation penalty variants.}
    \label{tab:prop_finished_large}
\end{table*}

\subsection{Results and Discussion} \label{sec:results_discussion}
Results for IT and PT models are presented in Table~\ref{tab:accuracy_it_big_pt}.\footnote{General accuracy of 1B PT models is very low and thus is left in App.~\ref{app:accuracy_small_pt}.}
In addition to the presented methods, we include two baselines:
unconstrained decoding, and constrained generation that skips the reasoning part and starts from the constrained answer.
Using pattern matching, the wrong answer can be extracted from a response in the following scenarios:
(1) the response is not finished because of a generation budget overflow;
(2) the response contains a final answer, but it is formatted incorrectly;
(3) the response contains a final answer with correct formatting, but it is followed with another substring that matches the formatting; the final substring matching the formatting is extracted instead of the actual final answer.
Examples of these behaviors are shown in App.~\ref{app:motivation}.
Based on these failure cases, we examine the strengths and weaknesses of each approach in the following.

Overall, on OLMo 2, greedy pipeline as well as last hypothesis selection via bifurcation penalty (in prob.\ and log-prob.\ spaces) consistently improves over unconstrained greedy generation across all tasks.
On Gemma 3, we observe a similar pattern except for GSM8K and SVAMP that expect a numerical final answer.
On this model, greedy pipelines offers the most stable improvement over unconstrained generation across tasks.

\textbf{Bifurcation penalty vs. unconstrained search.}
\emph{The bifurcation penalty with the last hypothesis selection solves all the previously mentioned failure cases from which suffers unconstrained generation}.
This results in improved accuracy with bifurcation penalty with last hypothesis selection compared to unconstrained generation for almost all tasks and models.

\textbf{Bifurcation penalty with last hypothesis selection v.s.\ greedy pipeline.}
These methods tend to produce very similar results, as the last hypothesis often starts from the last position.
However, the bifurcation penalty last hypothesis selection algorithm has more freedom in selecting the start of the constrained part.
Moreover, this algorithm produces a valid answer within the limited generation tokens budget,
unlike the greedy pipeline that can overflow the budget without generating an answer (\emph{i.e.}, when the first unconstrained phase overflows the budget, 1st failure case).

\textbf{Minimum penalty selection.}
Selecting the minimum penalty hypothesis, on the other hand, tends to produce shorter outputs (see App.~\ref{app:response_length}), which in mathematical tasks skips derivation steps and produces intermediate results instead of the final expected answer, \emph{e.g.}, \texttt{\textbackslash boxed\{\textbackslash sqrt\{117\}\}} instead of \texttt{\textbackslash boxed\{3\textbackslash sqrt\{13\}\}}.
In most of the studied cases, this algorithm performs poorly compared to the unconstrained baseline.
The exception is the results of PT models on MCQ tasks, which require knowledge invocation rather than long reasoning.

\textbf{Constrained only.}
Interestingly, omitting the reasoning part of the output yields different performance patterns across tasks.
For instance, MCQ tasks suffer almost no performance degradation.
On the contrary, the constrained generation method achieves the best performance in some cases, as with OLMo 2 13B PT on ARC (resp.\ CSQA): 72.7 (resp.\ 66.3), compared to 44.7 (resp.\ 21.3) for the second-best score, and similarly for Gemma 3 12B PT.
Whereas for math tasks that usually require more derivation and reasoning steps, we observe the opposite pattern: the performance drop compared to unconstrained decoding is considerable, \emph{e.g.}, from 84.9 to 16.1, from 63.9 to 1.5, etc.

\subsection{Further Analysis} \label{sec:further_analysis}

The constrained hypothesis beam generation algorithm produces longer outputs as the hypothesis beam is frequently interrupted and less likely to terminate (first failure case).
Unlike the minimum bifurcation penalty algorithm, which has large performance drops in some settings, constrained beam performance remains steady and shows no large drops, though it still underperforms the unconstrained baseline in most settings.
Table~\ref{tab:prop_finished_large} shows the proportion of terminated outputs for each algorithm before the budget limit is reached.
We can notice that \emph{all bifurcation algorithm variants almost always successfully finish their generation}.
Unconstrained decoding finishes its generation more often for IT models than for PT models, as the latter models are trained on raw text with no specific $\bot$ delimitation.
Finally, the constrained hypothesis beam algorithm fails to complete its generation in a considerable portion of cases on MATH500, ARC, and CSQA.
This can explain the performance drop of the constrained hypothesis beam algorithm compared to unconstrained decoding on these tasks.
Since the only termination path of the constrained hypothesis beam algorithm is via the constrained hypothesis, as the unconstrained hypothesis cannot generate the $\bot$ termination symbol, we can conclude in this case that the constrained hypothesis is continually interrupted.
Further investigation of this interruption phenomenon is given in App.~\ref{app:response_length}.

\section{Conclusion} \label{sec:conclusion}

In this work,
we introduce the suffix-constrained generation problem and propose several greedy search algorithms for causal LLM.
Using question-answering problems as an evaluation proxy,
we show that our algorithms allow to constrain outputs without negatively impacting downstream task performances.

Our work exposes the structural and algorithmic challenges underlying this setting and opens a new research direction in constrained generation.

\section*{Limitations} \label{sec:limitations}

An open question is how to adapt ancestral sampling for suffix-constrained generation,
as grammar constraints distort the output distribution \citep{li2025adaptrackconstraineddecodingdistorting, park2024grammaraligneddecoding}.
Such research questions are left for future work.

\section*{Acknowledgments}

We thank Aurélie Névéol and Karën Fort for fruitful discussions that led to this research work.

This work is supported by the SEMIAMOR (CE23-2023-0005) and InExtenso (ANR-23-IAS1-0004) project grants given by the French National Research Agency (ANR). This work was granted access to the HPC resources of IDRIS under the allocation 2024-AD011015801 made by GENCI.


\bibliography{custom}

\appendix

\section{Suffix-constrained Generation for Question Answering}
\label{app:motivation}

In this section, we give motivation example for the suffix-constrained generation problem.

Table \ref{tab:failure_examples} shows example of generated responses using greedy unconstrained decoding and our method, on different tasks and models.
These examples illustrate different failure cases (presented in Section \ref{sec:results_discussion}) that suffix-constrained generation solves.

The first failure case, that is, the case of generation budget overflow, is illustrated in the generations of the pre-trained model in Table \ref{tab:failure_examples} for both math and multiple-choice question tasks.
We can see in the math reasoning example that an intermediate number is extracted as the final answer, as the model did not finish its reasoning.
And similarly for the question answering task, as the unconstrained generation did not produce any final answer before overflowing the budget.
The example response using suffix-constrained generation algorithms avoids this issue by producing a natural interruption of the reasoning to generate the final answer in the expect format, and before the exhaustion of the generation budget.

The instruction-tuned model unconstrained generation in Table \ref{tab:failure_examples} shows an example of the third failure case, that is when the model generates the final answer in the expected format, but follows it with another substring in the same format.
In this case, deterministic pattern-matching extraction methods would extract the final substring instead of the actual final answer intended by the model, which leads to misevaluation of the model's response.
For instance, the math reasoning example shows that, instead of the actual answer ``576'', the tailing ``96'' is extracted, which corresponds to a contextual detail.
The suffix-constrained generation on the other hand follows the unconstrained response with an explicit phrasing of the final answer in the correct format, thus guaranteeing that the correct answer is extracted.

\begin{table*}[!ht]
    \centering
    \scriptsize
    \begin{tabular}{lp{0.85\linewidth}}
        \toprule
        \multicolumn{2}{c}{\textbf{\small Math reasoning on GSM8K}} \\
        \midrule
        \multicolumn{2}{l}{\textbf{13B, instruction-tuned}}
        \\
        Input & If 6 potatoes makes 36 hash browns, how many hash browns can you make out of 96 potatoes?
        \\
        Greedy search & [\dots] Therefore, you can make 576 hash browns out of \textcolor{Red}{96} potatoes.
        \\
        Our method & [\dots] Therefore, you can make 576 hash browns out of 96 potatoes. The answer is: \textcolor{Red}{576}.
        \\[0.5em]
        
        \multicolumn{2}{l}{\textbf{13B, pre-trained}}
        \\
        Input & James has 6 more candies than Robert. John has twice as many candies as Robert. If John has 54 candies, how many more candies does John have than James?
        \\
        Greedy search & [\dots]  Since John has 54 candies and James has \textcolor{Red}{33} candies, we can find the difference by subtracting James's<end of budget>
        \\
        Our method & [\dots] Since John has 54 candies and The answer is: \textcolor{Red}{21}.
        \\
        \midrule
        \multicolumn{2}{c}{\textbf{\small Question answering on ARC}} \\
        \midrule
        \multicolumn{2}{l}{\textbf{13B, instruction-tuned}}
        \\
        Input & Choose the correct answer to the following multiple-choice question. Question: A euglena cell has a structure called an eyespot that detects light. A paramecium does not have an eyespot, and so it cannot detect light. Why doesn't a paramecium need an eyespot?
        
        A). A paramecium does not live where there is light.~~~~~~~~B). A paramecium does not photosynthesize its own food.
        
        C). A paramecium lives in shallower water than a euglena~~~~~~~~D). A paramecium uses cilia instead of a flagellum to move.
        
        Provide your reasoning about the answer and finish your answer with the letter corresponding to the correct option (e.g., A, B, C, or D).
        \\
        Greedy search & [\dots] Therefore, the correct answer is: B. \textcolor{Red}{A} paramecium does not photosynthesize its own food.
        \\
        Our method & [\dots] Therefore, the correct answer is: B. A paramecium does not photosynthesize its own food. The answer is: \textcolor{Red}{B}
        \\[0.5em]
        
        \multicolumn{2}{l}{\textbf{13B, pre-trained}}
        \\
        Input & Choose the correct answer to the following multiple-choice question.

        Question: Approximately 71 \% of the Earth's surface is covered with water. The majority of this is salt water and is found in the oceans. Where is the majority of the Earth's fresh water found?

        A). ice.~~~~~~~~B). lakes.~~~~~~~~C). rivers.~~~~~~~~D). the atmosphere
        
        Provide your reasoning about the answer and finish your answer with the letter corresponding to the correct option (e.g., A, B, C, or D). 
        \\
        Greedy search & [\dots] The majority of the Earth's fresh water is found in ice. The majority of the Earth's fresh water is found<end of budget> \textcolor{Red}{(no answer is extracted)}
        \\
        Our method & [\dots] The majority of the Earth's fresh water is found in ice. The majority of the The answer is: \textcolor{Red}{A}
        \\
        \bottomrule
    \end{tabular}

    \caption{Examples of model outputs generated by OLMo2 (13B, instruction-tuned for the first example of each task and pre-trained for the second example of each task) using greedy unconstrained search and one of the suffix-constrained methods we introduce (bifurcation penalty with last hypothesis selection, see Section~\ref{sec:bifurcation_penalty}).
    For math reasoning on GSM8K, the evaluation protocol extracts the last number.
    For question answering on ARC, the evaluation protocol extracts the last individual (capital) letter in the set $\{A, B, \dots, J\}$.
    Extracted answers from each output are colored in red.
    We can see that the extracted answer from the greedy search response is not the intended final answer. Instead, another substring with the same formatting is extracted as the answer.
    Pre-trained only models tend not to finish their generation and enter a looping response until hitting the end of the allocated generation budget.
    Our proposed method finds a correct continuation that interrupts the loop and produces the final answer.
    }
    \label{tab:failure_examples}
\end{table*}

\section{Practical Considerations.} \label{app:practical_considerations}
Each prompt is augmented with a final instruction specifying the expected formatting of the final answer.
After the output is generated, the answer is extracted using a simple matching algorithm.
For evaluation, we rely on exact match against the gold answer.
If no match is found, no answer is extracted, and the LLM's response is evaluated as false.

For constrained generation, the formatting is mirrored by a formal grammar.
In our experiment, we mainly rely on regular languages, for example:
\begin{quote}
    \texttt{The answer is: [+-]?\textbackslash d+(\textbackslash .\textbackslash d+)?} \qquad
    \texttt{The answer is: \textbackslash b[A-J]\textbackslash b} 
\end{quote}
However, for Math500 the answer follows \LaTeX\ mathematical notation.
As such, we need to constrain the answer to close all opened brackets, meaning that the language is context-free, in a nutshell:
\begin{quote}
    \texttt{The answer is: \textbackslash boxed\{ ... \}}
\end{quote}
where ``\dots'' is replaced with the context-free well bracketed \LaTeX\ expression constraints.

Note that our grammars account for all possible tokenizations of sequences.

\section{Answer Patterns} \label{app:answer_patterns}

For the MATH500 task for which answers are expected to be surrounded by a \texttt{\textbackslash boxed\{\}} in \LaTeX\ format, the following final instruction is added to each prompt:
\texttt{Present the answer in LaTex format: \textbackslash boxed\{Your answer\}}.

For other mathematical reasoning tasks (GSM8K and SVAMP), answers are extracted as the last, optionally signed, decimal or natural number in the response.

Multiple-choice question tasks, on the other hand, expect a final option letter.
The following final instruction is added to each prompt: \texttt{Provide your reasoning about the answer and finish your answer with the letter corresponding to the correct option (e.g., A, B, C, or D).}

\section{Constrained Hypothesis Beam Search}
\label{app:constrained_hypothesis_beam}

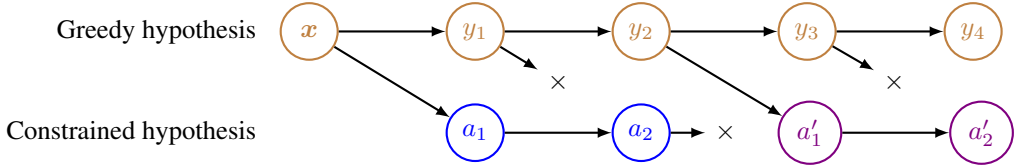
\begin{figure*}
\centering
\small
\begin{tikzpicture}[
    node distance=1.6cm,
    every path/.style={->, thick}
]
\tikzset{
  beamnode/.style={circle, draw, minimum size=7.5mm, thick},
}
\newlength{\hdist}
\setlength{\hdist}{40pt}
\newlength{\hdistp}
\setlength{\hdistp}{40pt}

\node[beamnode, brown] (g1) at (0,0) {$\vx$};
\node[beamnode, brown] (g2) [right=\hdist of g1] {$\evy_1$};
\node[beamnode, brown] (g3) [right=\hdist of g2] {$\evy_2$};
\node[beamnode, brown] (g4) [right=\hdistp of g3] {$\evy_3$};
\node[beamnode, brown] (g5) [right=\hdist of g4] {$\evy_4$};

\draw (g1) -- (g2);
\draw (g2) -- (g3);
\draw (g3) -- (g4);
\draw (g4) -- (g5);

\node[left=5pt of g1,anchor=east] (G) {Greedy hypothesis};

\node[below=16pt of G.south east,anchor=east] (H) {Constrained hypothesis};

\node[beamnode, blue] (h2) at ($(H.east -| g2.south)$) {$\eva_1$};
\node[beamnode, blue] (h3) [right=\hdist of h2] {$\eva_2$};
\node[beamnode, violet] (h4) [right=\hdistp of h3] {$\eva'_1$};
\node[beamnode, violet] (h5) [right=\hdist of h4] {$\eva'_2$};


\node (h3end) at ($(h3)!0.5!(h4)$) {$\times$};
\node (g2end) at ($(g2)!0.5!(h3)$) {$\times$};
\node (g4end) at ($(g4)!0.5!(h5)$) {$\times$};
\draw (g2) -- (g2end);
\draw (g4) -- (g4end);

\draw (g1) -- (h2);
\draw (h2) -- (h3);
\draw (h3) -- (h3end);
\draw (g3) -- (h4);
\draw (h4) -- (h5);

\end{tikzpicture}
\caption{Constrained hypothesis beam search with a greedy hypothesis (top) and a constrained hypothesis (bottom).
At each generation step, the constrained hypothesis can be replaced (or not) with a new one.
For example, after step 1, we choose to continue with the current constrained hypothesis (token $\eva_2$ is added after $\eva_1$).
However, after step two, the constrained hypothesis has been replaced with a new one, whose first token is denoted $\eva'_1$.
At step 4,
the greedy hypothesis is defined as $\vy = \langle \evy_1, \evy_2, \evy_3, \evy_4 \rangle$,
whereas the constrained one is $\vr \odot \langle \eva'_1, \eva'_2\rangle$,
where the reasoning part is composed of tokens from the greedy hypothesis before the replacement, that is $\vr = \langle \evy_1, \evy_2\rangle$.}
\label{fig:constrained_beam}
\end{figure*}

Figure~\ref{fig:constrained_beam} shows an illustration of the constrained hypothesis beam search with 2 hypothesis: the greedy unconstrained hypothesis and the constrained hypothesis.
The figure shows the interruption process of the constrained hypothesis beam by a new constrained hypothesis starting from the current unconstrained beam.

\section{Response Length}
\label{app:response_length}

\begin{figure}
    \centering
    \begin{tikzpicture}
        \begin{axis}[
            ylabel={Entropy difference},
            xlabel={Generation step},
            height=4cm,
            width=\linewidth,
            ymax=1.1,
            enlarge x limits = 0,
            tick label style={
                font=\tiny
            },
            label style={
                font=\tiny
            },
            legend style={
                font=\tiny
            },
            legend columns=2,
            legend style = {
                at = {(0.5, 0.98)},
                anchor = north,
            },
            extra y ticks={0},
            extra y tick style={
                grid style={
                    dotted,
                    color=black,
                }, 
                grid=major,
            },
        ]
            \addplot [mark=none, thick, color=blue] table[x=x,y=y] {figures/data/min_entropies_ps_gsm8k.dat};
            \addlegendentry{GMS8K}
            
            \addplot [mark=none, thick, color=red] table[x=x,y=y] {figures/data/min_entropies_ps_math500.dat};
            \addlegendentry{MATH500}
            
            %
        \end{axis}
    \end{tikzpicture}
    \caption{The average min-entropy difference between the next-token distribution under the unconstrained hyp.\ and the second token in a new-constraint hyp., at each generation step, on OLMo 2 13B IT.
    }
    \label{fig:min_entropy_diff}
\end{figure}
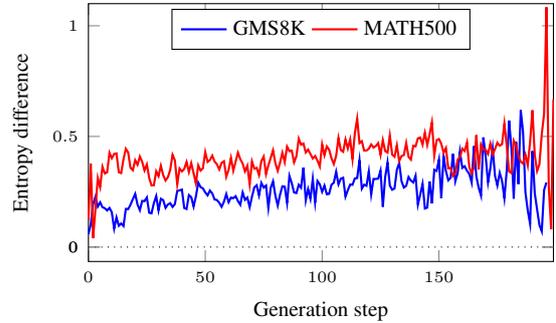

\begin{figure*}[t]
    \centering
    \begin{tikzpicture}[
    greedy/.style={draw=black, pattern={horizontal lines}, pattern color=red},
    pipeline/.style={draw=black, pattern={vertical lines}, pattern color=black},
    constr/.style={draw=black, pattern={grid}, pattern color=red},
    minprob/.style={draw=black, pattern={crosshatch}, pattern color=black},
    lastprob/.style={draw=black, pattern={dots}, pattern color=red},
    minlogit/.style={draw=black, pattern={north east lines}, pattern color=black},
    lastlogit/.style={draw=black, pattern={north west lines}, pattern color=red},
]
\begin{groupplot}[
    ybar = 0,
    group style = {
        group size=1 by 2,
        vertical sep=0.4cm,
    },
    width = 14cm,
    height = 3cm,
    enlarge x limits = {abs=1.7cm},
    symbolic x coords = {
        1bpt,
        1bit,
        13bpt,
        13bit,
    },
    ymin = 0,
    ymax = 1056,
    xticklabel = \empty,
    tick label style={
        font=\tiny
    },
    legend style = {
        at={(0.47,-0.1)},
        anchor=north,
        legend columns=4,
        font=\scriptsize,
    },
    y label style = {
        font=\scriptsize,
    }
]
\nextgroupplot[
    ylabel = {Math tasks},
]
\begin{scope}[on background layer]
\foreach \i in {0,2}{
    \pgfmathsetmacro{\currL}{\i/4}
    \pgfmathsetmacro{\currR}{(\i+1)/4}
    
    \edef\temp{
        \noexpand\fill[color=gray!7, draw=none]
            (rel axis cs:\currL, -1.45) rectangle (rel axis cs:\currR, 1.3);
    }
    \temp
}

\node[align=center] at (rel axis cs:0/4 + 1/8, 1.12) {\bfseries \scriptsize OLMo 2 1B-PT};
\node[align=center] at (rel axis cs:1/4 + 1/8, 1.12) {\bfseries \scriptsize OLMo 2 1B-IT};
\node[align=center] at (rel axis cs:2/4 + 1/8, 1.12) {\bfseries \scriptsize OLMo 2 13B-PT};
\node[align=center] at (rel axis cs:3/4 + 1/8, 1.12) {\bfseries \scriptsize OLMo 2 13B-IT};
\end{scope}

\addplot[greedy] coordinates {
({1bpt}, 410.495044832468)
({1bit}, 330.015101462954)
({13bpt}, 469.071260028315)
({13bit}, 237.418593676262)
};

\addplot[pipeline] coordinates {
({1bpt}, 415.290231241152)
({1bit}, 336.818310523832)
({13bpt}, 474.301557338367)
({13bit}, 244.970268994809)
};

\addplot[constr] coordinates {
({1bpt}, 379.958942897593)
({1bit}, 411.184521000472)
({13bpt}, 437.892873997168)
({13bit}, 359.56158565361)
};

\addplot[minprob] coordinates {
({1bpt}, 36.5563945257197)
({1bit}, 178.546956111373)
({13bpt}, 133.815950920245)
({13bit}, 108.929211892402)
};

\addplot[lastprob] coordinates {
({1bpt}, 401.287871637565)
({1bit}, 328.467201510146)
({13bpt}, 466.062765455404)
({13bit}, 242.196319018405)
};

\addplot[minlogit] coordinates {
({1bpt}, 50.0089664936291)
({1bit}, 191.436054742803)
({13bpt}, 123.956111373289)
({13bit}, 127.35913166588)
};

\addplot[lastlogit] coordinates {
({1bpt}, 402.472392638037)
({1bit}, 328.177442189712)
({13bpt}, 466.272770174611)
({13bit}, 243.842850401133)
};

\nextgroupplot[
    ylabel = {MCQ tasks},
]
\addplot[greedy] coordinates {
({1bpt}, 281.068951107397)
({1bit}, 186.292519849561)
({13bpt}, 317.890931884664)
({13bit}, 183.855411617217)
};

\addplot[pipeline] coordinates {
({1bpt}, 284.781863769327)
({1bit}, 191.282072712077)
({13bpt}, 322.504387797743)
({13bit}, 188.849143334726)
};

\addplot[constr] coordinates {
({1bpt}, 1022.43836188884)
({1bit}, 1021.27162557459)
({13bpt}, 1023.97659841204)
({13bit}, 794.309653155036)
};

\addplot[minprob] coordinates {
({1bpt}, 20.4412870873381)
({1bit}, 89.9139155871291)
({13bpt}, 45.9180944421229)
({13bit}, 95.2264939406603)
};

\addplot[lastprob] coordinates {
({1bpt}, 266.08023401588)
({1bit}, 189.745089845382)
({13bpt}, 314.386126201421)
({13bit}, 186.965315503552)
};

\addplot[minlogit] coordinates {
({1bpt}, 28.888006686168)
({1bit}, 83.3372335979941)
({13bpt}, 46.8800668616799)
({13bit}, 95.3506059339741)
};

\addplot[lastlogit] coordinates {
({1bpt}, 264.507312996239)
({1bit}, 191.137902214793)
({13bpt}, 315.64646886753)
({13bit}, 188.062682824906)
};

\legend{{Greedy gen.}, {Pipeline gen.}, {Constr. beam}, {Bif. probs. min}, {Bif. probs. last}, {Bif. logits min}, {Bif. logits last}}
\end{groupplot}

\end{tikzpicture}
    \caption{Average response lengths, per model, per group of tasks, and per generation method. ``pt'' stands for pre-trained and ``it'' stands for instruction-tuned. The maximum number of allowed answer tokens is 1024.}
    \label{fig:avg_response_length}
\end{figure*}

Figure \ref{fig:avg_response_length} shows the average response lengths of different generation methods on different models and tasks.
As discussed in Section \ref{sec:further_analysis} and Table \ref{tab:prop_finished_large}, Constrained hypothesis beam overflows the generation budget on MCQ tasks, while Bifurcation Penalty with minimum penalty selection produces on average shorter responses.
Other methods produce responses of similar length to unconstrained generation.

To investigate the interruption phenomenon discussed in Section~\ref{sec:further_analysis}, we study in Figure \ref{fig:min_entropy_diff} the entropy difference of the model distribution on the constrained hypothesis and on the unconstrained hypothesis after a single constrained step. More exactly, we are interested in this entropy difference:
$H \big [~ p(\cdot | \vx \odot \vy \odot \langle \argmax_{v \in \overline{\operatorname{next}}_\mathcal G (\epsilon)} p(v | \vx \odot \vy) \rangle ~\big ]
-
H \big [~ p(\cdot | \vx \odot \vy \odot \langle \argmax_{v \in V} p(v | \vx \odot \vy) \rangle ~\big ]$,
where $H$ is the min-entropy \cite{renner2004minent} .
We study in this figure the case of GSM8K and MATH500 on OLMo 2 13B IT, on which this model exhibits very contrasting constrained hypothesis interruption behaviors as per Table \ref{tab:prop_finished_large}.
We observe that, on MATH500, this entropy difference is on average higher than that on GSM8K at almost all generation positions.
A large difference in entropy indicates a drop in confidence from taking a constrained step, leading to a larger difference in scores between the greedy and the constrained beam.
This difference favors the interruption of the current constrained hypothesis by a newer constrained hypothesis starting from an advanced decoding position.

\section{Small Pre-trained Models Results} \label{app:accuracy_small_pt}

\begin{table*}[!ht]
    \small
    \centering
    \begin{tabular}{l iji c ji c jij c ij}
        \toprule
        & \multicolumn{3}{c}{\textbf{Math}} && \multicolumn{2}{c}{\textbf{MCQ}}
        && \multicolumn{3}{c}{\textbf{Math}} && \multicolumn{2}{c}{\textbf{MCQ}}
        \\
        \cmidrule{2-4} \cmidrule{6-7} \cmidrule{9-11} \cmidrule{13-14}
        & \textbf{GMS} & \textbf{MATH} & \textbf{SVMP} && \textbf{ARC} & \textbf{CSQA}
        && \textbf{GMS} & \textbf{MATH} & \textbf{SVMP} && \textbf{ARC} & \textbf{CSQA}
        \\

        \midrule
        & \multicolumn{6}{c}{\textbf{OLMo 2 1B, Pre-trained}}
        && \multicolumn{6}{c}{\textbf{Gemma 3 1B, Pre-trained}}
        \\
        \cmidrule{2-7} \cmidrule{9-14}
        \multicolumn{4}{l}{\textbf{Baselines}} &&&& \\
        unconstrained                   & $17.5$ & $00.8$ & $20.6$ && $18.0$ & $16.2$ && $\underline{01.4}$ & $00.0$ & $\mathbf{01.6}$ && $22.6$ & $17.7$ \\ 
        constrained only                & $04.4$ & $00.4$ & $\mathbf{33.6}$ && $25.6$ & $20.8$ && $01.2$ & $\mathbf{02.8}$ & $01.0$ && $\underline{26.0}$ & $\underline{19.8}$ \\
        
        \multicolumn{4}{l}{\textbf{Naive methods}} &&&& \\ 
        greedy pipeline          & $\underline{18.1}$ & $01.4$ & $22.0$ && $24.4$ & $25.0$ && $00.8$ & $\underline{01.0}$ & $01.0$ && $21.5$ & $18.0$ \\
        constr.\ hyp.\ beam       & $12.8$ & $01.2$ & $20.3$ && $20.0$ & $22.5$ && $\mathbf{01.5}$ & $00.0$ & $\underline{01.3}$ && $22.0$ & $\mathbf{20.2}$ \\
        
        \multicolumn{4}{l}{\textbf{Probability bif.}} &&&& \\
        min penalty              & $05.0$ & $00.4$ & $\underline{26.0}$ && $26.4$ & $22.8$ && $01.2$ & $\underline{01.0}$ & $00.6$ && $22.8$ & $19.0$ \\
        last hypothesis          & $\mathbf{19.4}$ & $\underline{02.2}$ & $24.6$ && $\underline{29.9}$ & $\underline{27.6}$ && $00.9$ & $00.8$ & $00.6$ && $\mathbf{26.2}$ & $18.5$ \\
        
        \multicolumn{4}{l}{\textbf{Logits bif.}} &&&& \\ 
        min penalty              & $07.8$ & $02.0$ & $17.3$ && $29.0$ & $26.9$ && $01.3$ & $00.2$ & $01.0$ && $22.8$ & $19.0$ \\
        last hypothesis          & $\mathbf{19.4}$ & $\mathbf{02.4}$ & $23.6$ && $\mathbf{30.2}$ & $\mathbf{28.1}$ && $01.3$ & $00.4$ & $01.0$ && $24.5$ & $19.6$ \\

        \bottomrule
    \end{tabular}
    
    \caption{OLMo 2 1B PT and Gemma 3 1B PT models' accuracy with the different presented decoding algorithms.}
    \label{tab:accuracy_small_pt}
\end{table*}

Results of small PT models are shown in Table \ref{tab:accuracy_small_pt}.
Results of OLMo 2 1B PT are consistent with those of larger models in Table \ref{tab:accuracy_it_big_pt};
Bifurcation penalty with last hypothesis selection outperforms other methods and consistently improve over unconstrained generation.
Scores for Gemma 3 1B PT are close to 0 on Math tasks, given us no margin for interpretation.

\section{Computation Resources} \label{app:computation_resources}

All evaluation experiments were carried on NVIDIA A100 80GB GPUs and parallelized over 8 GPUs for efficiency.
Evaluations span approximately from 1 hour to 5 hours depending on the model size and dataset size.

\end{document}